\newtheorem{theorem}{Theorem}
\newtheorem{lemma}{Lemma}
\newtheorem{proposition}{Proposition}
\newtheorem{property}{Property}
\newtheorem{corollary}{Corollary}
\theoremstyle{definition}
\newtheorem{definition}{Definition}
\newtheorem{notation}{Notation}
\newcommand{\Chi}{\mathrm{M}}
\title{X-SHAP: towards multiplicative explainability of Machine Learning}
\author{%
  Luisa Bouneder\\
  Emerton Data\\
  \texttt{luisa.bouneder@emerton-data.com} \\
  \And
  Yannick Léo\\
  Emerton Data\\
  \texttt{yannick.leo@emerton-data.com} \\
  \And
  Aimé Lachapelle\thanks{www.emerton-data.com} \\
  Emerton Data\\
  \texttt{aime.lachapelle@emerton-data.com}
}
\begin{document}

\maketitle

\begin{abstract}

This paper introduces X-SHAP, a model-agnostic method that assesses multiplicative contributions of variables for both local and global predictions. This method theoretically and operationally extends the so-called additive SHAP approach. It proves useful underlying multiplicative interactions of factors, typically arising in sectors where Generalized Linear Models are traditionally used, such as in insurance or biology. We test the method on various datasets and propose a set of techniques based on individual X-SHAP contributions to build aggregated multiplicative contributions and to capture multiplicative feature importance, that we compare to traditional techniques. 
\end{abstract}

\section{Introduction}

Interpretation of prediction model outputs can be as important as the prediction of machine learning models, e.g. insurance pricing, credit rejection or acceptance, recommendation to decision markers,  medical diagnostic. The users need to understand the factors underlying the prediction. Model interpretability offers the possibility to better audit the robustness and fairness of predictive models. Simple models such as linear regressions or GLMs are quite accurate and easily interpretable. On the contrary, the development of more complex models, such as machine learning ensemble models or deep learning models leads, to highly accurate but more complex models that are difficult to interpret. The trade-off between building a more accurate model vs. keeping a simple and interpretable model is not an easy choice. In many cases, the simple interpretable model is still preferred. In order to solve the accuracy-interpretability trade-off, a large number of interpretable methods have been proposed~\citep{ribeiro2018lime, lundberg2017shapgithub, shrikumar2017learning, strumbelj2013shap, lundberg2020local, bach2015pixel, datta2016algorithmic}. It is noteworthy that all these methods focus on additive contributions computation, none of them being able to tackle multiplicative contributions assessment.

In this paper, we introduce, X-SHAP, a model-agnostic interpretability method that provides multiplicative contributions for individual predictions. Our main contributions are summarized as follows:
\begin{enumerate}
    \item We extend the additive analytical solution to the model-agnostic multiplicative interpretability problem,
    \item We introduce X-SHAP, an algorithm that provides approximate multiplicative contributions at individual levels,
    \item We propose the X-SHAP toolbox, a new set of techniques used to understand global and segmented model structure by aggregating multiple local contributions,
    \item We empirically verify desirable properties and compare the X-SHAP approach to both the additive algorithm Kernel SHAP, and to well-known metrics on various supervised problems.
\end{enumerate}

\section{Related work}

The simplest way to interpret any prediction model's outputs is to analyze the model itself when it is not too complex. This is the case for simple models like Generalized Linear Models~\citep{nelder1972generalized, antonio2007actuarial, mcneil2007bayesian} or decision trees~\citep{safavian1991survey}, yet, more complex models are not directly interpretable.

To raise adoption of complex models, specific interpretable methods have been developed. Although neural networks have a black box nature, some interpretable approaches exist~\citep{bach2015pixel,shrikumar2017learning}. For instance, DeepLIFT~\citep{shrikumar2017learning} (Deep Learning Important FeaTures) decomposes the output prediction of a neural network on a specific input by backpropagating the contributions of all neurons in the network to each feature of the input. In order to interpret tree based machine learning ensemble models such as random forests or gradient boosting,~\citet{lundberg2020local} proposes a polynomial time explainer based on game theory that measures local feature interaction effects.

There are two types of model-agnostic interpretability methods. The first type consists of finding the training points that are most responsible for the prediction ~\citep{influence_functions, datta2016algorithmic}. The second type of general explainer performs a local linear regression around the prediction and extracts contributions from local linear models~\citep{ribeiro2018lime}. In this case, when features are not independent, contributions are produced via Shapley values, a concept in cooperative game theory, introduced in ~\citep{shapley1953value}, that assigns a unique distribution (among the players) of a total surplus generated by the coalition of all players. These are the SHAP methods~\citep{strumbelj2013shap, lundberg2017shapgithub}.

In many fields such as actuarial~\citep{antonio2007actuarial, goldburg2016glm}, epidemiology~\citep{land2000multiplicative}, economy~\citep{wang2017multiplicative} and medicine~\citep{mehta2016major} phenomenon are multiplicative by nature, very often with a traditional use of models (e.g. log-GLM), the available interpretability methods provide additive interpretations. Little attention has been paid to multiplicative contributions assessment despite the existence of theoretical extension of additive Shapley values~\citep{young1985monotonic} to multiplicative provided by~\citet{ortmann2013multiplicativeshap} to positive cooperative games. In this paper, we propose to fill this gap by extending the Kernel SHAP local interpretation method to multiplicative problems.

\section{Problem and notations}

\subsection{Model-agnostic interpretability problem}\label{sec:problem}

Let $X$ be an input dataset composed of $n$ observations $x_i$ and $m$ features where $X=\{x{_i}^j\}$ with $\forall i \in [1, n], \forall j \in [1, m], x{_i}^j \in \mathbb{R}$. $x_i$ refers to a single observation of the dataset $X$. The set of features $\{j\}_{j\in [1,m]}$ is noted $F$. Let us introduce a strictly positive target vector $Y=\{y_i\}_{i \in[1,n]}$ such that $\forall i \in [1, n], y_i > 0$. Let $f$ denotes the associated predictive model $f: \mathbb{R}^{k} \rightarrow \mathbb{R}^{+*}, \forall i \in [1, n], \hat{y}_i = f(x_i)$. Let us assume that the predictive model $f$ is already trained on the dataset $(X_{train}, Y_{train})$ with same properties as $(X, Y)$.

The usual method used to explain machine learning models is the additive contributions of features.

\begin{definition} \textbf{Additive feature contributions.} Let $f$ be a predictive model associated with $(X,Y)$ and $x_i$ a single observation of $X$ with $\hat{y}_i=f(x_i)$. The prediction of $x_i$ can be decomposed by the sum of the additive feature contributions:
\begin{align}\label{eq:shap_def}
    \phi^{0} + \sum \limits_{j = 1}^{m} \phi_{i}^{j}(x_i) = f(x_i) = \hat{y}_{i}
\end{align}

where $\phi^{0}$ is a baseline value for predictions, independent of the observations explained, $m$ is the number of features, $\phi_{i}^{j}$ is the additive contribution of feature $j$ to the model prediction $\hat{y}_{i}$ for the observation $x_{i}$. $\phi$ or $\phi_f$ denotes the set of additive contributions related to $f$.
\end{definition}

In this paper, we focus on use the multiplicative contributions of features.

\begin{definition} \textbf{Multiplicative feature contributions.} Let $f$ be a predictive model associated with $(X,Y)$ and $x_i$ a single observation of $X$ with $\hat{y}_i=f(x_i)$. The prediction of a single observation $x_i$, also refers to $x$ to simplify, can be decomposed by the product multiplicative feature contributions:
\begin{align}\label{eq:xshap_def}
    \psi^{0} \times \prod \limits_{j = 1}^{m} \psi_{i}^{j}(x_i) = f(x_i) = \hat{y}_{i}
\end{align}

where $\psi^{0}$ is a baseline value for predictions, independent of the observations explained, $m$ is the number of features, $\psi_{i}^{j}$ is the multiplicative contribution of feature $j$ to the model prediction $\hat{y}_{i}$ for the observation $x_{i}$. We note $\psi$ or $\psi_f$, the set of multiplicative contributions related to $f$.
\end{definition}

\textbf{Model-agnostic interpretability problem} feeds as follows: given any predictive model $f$ associated with the dataset $(X, Y)$, the multiplicative (resp. additive) model-agnostic interpretability problem consists of finding, for any prediction $(x_i, \hat{y}_i)$, a multiplicative (resp. additive) feature contributions $\psi$ (resp. $\phi$).

\subsection{Notation and definitions}

\begin{notation} \textbf{Arithmetic and geometric means.}
Considering $n$ real values $\forall i \in [1,n], x_i \in \mathbb{R}$, the arithmetic mean is noted ${<x>}_{+} = \frac{1}{n} \sum_{i=1}^{n}{x_i}$ and the geometric mean is noted ${<x>}_{\times} = {(\prod_{i=1}^{n}{x_i})}^{\frac{1}{n}}$
\end{notation}

\begin{definition} \textbf{Coalition vector.}
We define the coalition vector $c$ of dimension $m$ as a simple binary vector $c \in [0, 1]^m$ representing a set of activated features with $c$ of $F$. The complementary coalition vector, noted $\bar{c}$, is defined as follows: $\forall j \in [1, m], \bar{c}^j=1-c^j$. $c$ can also be noted $c_k$ when multiple coalitions have to be enumerated.
\end{definition}

\begin{definition} \textbf{Sub-observation and sub-dataset.}
Considering an observation $x_i$ of a dataset $X$ and a coalition vector $c \subset F$, the induced  sub-observation is given by ${x_i}^c = x_i \times c$. One can extend to the sub dataset $X^c=X*c$.
\end{definition}

\begin{definition} \textbf{Augmented observation.}
Considering an observation $x_i$ of size $m$ of a dataset $X$, the augmented dataset is defined as the duplicate ($n$ times) of $x_i$: $X_i = x_i \times \mathds{1}_n$. Thus, the size of the matrix $X_i$ is $n\times m$
\end{definition}

\begin{definition} \textbf{Perturbated coalition dataset.}
Considering an augmented observation $X_i$ of an observation $x_i \in X$ and a coalition vector $c$ of $F$, we define the perturbated coalition as $\Chi^c(X, x_i) = X_i^c+X^{\bar{c}}$
\end{definition}

\section{Short review of the Kernel SHAP method used for additive contributions}

Before introducing the X-SHAP method, end for the sake of comparison and clarity, we remind the Kernel SHAP method from which it is derived.

A theoretical solution to the additive version of the model-agnostic interpretation problem is introduced in ~\citep{young1985monotonic,strumbelj2013shap,lundberg2017shapgithub}. It shows that additive Shapley values defined in eq. \eqref{eq:add_shapley_values} are the unique solution to the additive model-agnostic interpretability problem defined in section~\ref{sec:problem} that respects local accuracy, missingness and consistency properties defined in ~\citep{lundberg2017shapgithub}. The solution is given by:

\begin{align}\label{eq:add_shapley_values}
    \phi^{j}(x) = \sum \limits_{c \subset F \setminus \{j\}} \frac{|c|!(|F| - |c| - 1)!}{|F|!}(f_{c \cup  \{j\}}(x_{c \cup \{j\}})-f_{c}(x_{c})) 
\end{align}
where $x$ is the considered observation, $f$ the predictive model and $f_{c}(x_{c})$ is the prediction of the model restrained to the space of features $c$ applied to sub-observation $x_{c}$, $F$ is the set of features. For all coalitions, combinatory fractions are noted as the weights $W$.

In practice, for a given dataset $X$, the additive contribution of a feature $j$ is averaged among multiple observations. It can be proven that the problem of computing the Shapley value is an NP-complete problem. Therefore, ~\citet{lundberg2017shapgithub} propose the Kernel SHAP method to approximate the additive feature contributions $\tilde{\phi}\approx\phi$. A python library of the Kernel SHAP algorihtm is implemented and available\footnote{\url{https://github.com/slundberg/shap}}. To do so, ~\citep{lundberg2017shapgithub} makes two main simplifications:
\begin{enumerate}
    \item First, in order to obtain linear computation of the Shapley values, as proposed in~\citet{castro2009polynomial}, not all the coalitions are enumerated. The selection of coalitions is done in order of importance in the Shapley values formula (eq.\eqref{eq:add_shapley_values}) measured by the weights $W$. First come coalitions of size $1$ (all singletons) and their respective complementary (of size $m - 1$), then all coalitions of size $2$ paired with their complementary (of size $m - 2$), and so on
    \item Second, a representative sample $X^{ref}$ of the whole dataset $X$ containing $n^{ref} << n$ observations is considered to compute contributions. Thus, the average reference target value is $\hat{y}^{ref} = <f(X^{ref}>_+$
\end{enumerate}

Then, in order to compute the additive contributions of an observation $x_i$, the perturbated coalition dataset $\Chi^c(X, x_i)$ is built for each coalition $c\in C$ as follows: $\forall c \in C, \Chi^c(X^{ref}, x_i) = {X_i}^c+{X^{ref}}^{\bar{c}}$. The average coalition target value is obtained by applying $f$ on the perturbated coalation dataset and averaging: $\hat{y}^c(x_i) = <f(\Chi(X, x_i)^c)>_+$. For each coalition, the gap between the coalition target value and the reference target value $\Delta^c(x_i) = \hat{y}^c(x_i)-\hat{y}^{ref}$ intuitively captures the impact of the coalition $c$. Therefore, the last step of the Kernel SHAP method consists of applying a weighted linear regression on $\Delta(x_i) = \{\Delta^c(x_i)\}_{c \in C}$ to  compute the approximated additive feature contributions. The closed form for the weighted regression is:

\begin{align}\label{eq:kernel_reg}
    \tilde{\phi}(x_i) = (W \cdot C^{T}C)^{-1}W \cdot C^{T} \Delta(x_i)
\end{align}

where $\tilde{\phi}$ is the estimated additive contributions of $f$ for the observation $x_i$ from Kernel SHAP method. As the coalitions $C$ are selected by order of weights $W$ in the Shapley values formula, the approximation $\tilde{\phi} \approx \phi$ is verified in practice if a sufficient number of coalitions is selected.

\section{Generalization to multiplicative contributions, X-SHAP}

\subsection{Theoretical extension: analytical solution to multiplicative contributions problem}

The X-SHAP algorithm adapts the Kernel SHAP method to multiplicative feature contributions. Thanks to the theoretical extension of the Shapley values, developed in~\citet{ortmann2013multiplicativeshap} in game theory, we easily extend the solution and desirable properties to the model-agnostic interpretabiltity problem.

In this section, we show that there is a unique solution of the multiplicative model-agnostic interpretability problem that verifies the geometrical efficiency (also refered to as local accuracy by~\citet{lundberg2017shapgithub}) and preserving-ratios properties.

\begin{property}\label{def:eff} \textbf{(Local accuracy)} Taking a predictive model $f$ associated with a dataset $(X, Y)$, the associated contributions function $\psi$ is geometrically efficient if it verifies the relation:
\begin{align}\label{eq:eff}
    \forall i \in [1, n],\  \psi^{0} \times \prod \limits_{j=1}^{m} \psi^{j}_{i}(x_i) = f(x_i) = \hat{y}_{i}
\end{align}
\end{property}

\begin{property}\label{def:ratios} \textbf{(Preserving-ratios)} For all $f$ and $(X,Y)$, the associated contributions $\psi$ is said to preserve ratios when one has:
\begin{align}\label{eq:ratios}
    \forall x \in X,
    \forall j_1 \ne j_2, \frac{\psi^{j_1}(x)}{\psi^{j_1}(c\setminus \{j_2\}, x)} = \frac{\psi^{j_2}(x)}{\psi^{j_2}(c\setminus \{j_1\}, x)}
\end{align}
\end{property}

\begin{theorem}\label{the:unique} For any predictive model $f$ associated with a dataset $(X, Y)$, there is a unique multiplicative feature contributions $\psi$ that is geometrically efficient and preserves ratios for the predictive model $f$ and for any observations $x \ X$. The solution is given by:
\begin{align}\label{eq:unique}
    \psi^{j}(x) = exp (\sum \limits_{c \subset F \setminus \{j\}} \frac{|c|!(|F| - |c| - 1)!}{|F|!}(\ln (f_{c \cup  \{j\}}(x_{c \cup \{j\}}))-\ln (f_{c}(x_{c}))))
\end{align}
\end{theorem}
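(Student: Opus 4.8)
The plan is to reduce the multiplicative problem to the additive one by taking logarithms, then invoke the known uniqueness theorem for additive Shapley values. Concretely, given a strictly positive model $f$, define $g = \ln \circ f$, which is a well-defined real-valued model on the same feature space. A multiplicative contribution function $\psi$ for $f$ corresponds bijectively to an additive contribution function $\phi$ for $g$ via $\phi^j(x) = \ln \psi^j(x)$ and $\phi^0 = \ln \psi^0$; the correspondence is valid precisely because all quantities involved are strictly positive, so the logarithm is defined and invertible. Under this dictionary, I would first check that geometric efficiency (Property~\ref{def:eff}) for $\psi$ is equivalent to ordinary local accuracy for $\phi$: taking $\ln$ of $\psi^0 \prod_j \psi^j_i(x_i) = f(x_i)$ gives $\phi^0 + \sum_j \phi^j_i(x_i) = g(x_i)$, which is exactly equation~\eqref{eq:shap_def} applied to $g$.

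Next I would translate the preserving-ratios property into the additive consistency/symmetry-type axiom that characterizes additive Shapley values. Taking logarithms of equation~\eqref{eq:ratios} turns the equality of ratios $\psi^{j_1}(x)/\psi^{j_1}(c\setminus\{j_2\},x) = \psi^{j_2}(x)/\psi^{j_2}(c\setminus\{j_1\},x)$ into an equality of differences $\phi^{j_1}(x) - \phi^{j_1}(c\setminus\{j_2\},x) = \phi^{j_2}(x) - \phi^{j_2}(c\setminus\{j_1\},x)$, which is the additive balanced-contributions / preserving-differences property known to be equivalent (together with efficiency) to the Shapley value; this equivalence is exactly the content of the additive characterization cited in the excerpt from~\citep{young1985monotonic, lundberg2017shapgithub} and of the multiplicative extension of~\citet{ortmann2013multiplicativeshap}. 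Having set up this dictionary, uniqueness and existence for $\psi$ follow immediately: the additive theorem gives a unique $\phi$ satisfying local accuracy and preserving-differences for $g$, namely the additive Shapley values of $g$ from equation~\eqref{eq:add_shapley_values}, and exponentiating yields the unique $\psi$ for $f$.

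Finally I would verify that exponentiating the additive Shapley formula for $g = \ln f$ produces exactly the claimed closed form~\eqref{eq:unique}. This is a direct substitution: in equation~\eqref{eq:add_shapley_values} replace $f$ by $g$, so $g_{c\cup\{j\}}(x_{c\cup\{j\}}) = \ln f_{c\cup\{j\}}(x_{c\cup\{j\}})$ and similarly for $g_c$, giving $\phi^j(x) = \sum_{c \subset F\setminus\{j\}} \frac{|c|!(|F|-|c|-1)!}{|F|!}\big(\ln f_{c\cup\{j\}}(x_{c\cup\{j\}}) - \ln f_c(x_c)\big)$, and then $\psi^j(x) = \exp(\phi^j(x))$ matches~\eqref{eq:unique}. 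One should also note that $f_c(x_c) > 0$ for every sub-coalition $c$ so that each logarithm is well-defined — this uses the standing assumption that $f$ maps into $\mathbb{R}^{+*}$ together with the convention for how the restricted model $f_c$ is defined.

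The main obstacle, and the step deserving the most care, is justifying the equivalence between the preserving-ratios property and the additive preserving-differences axiom at the level needed to apply the uniqueness result — in particular making precise the meaning of $\psi^j(c\setminus\{j_2\},x)$ (the contribution of $j$ computed within a restricted feature set) and checking that the logarithm turns this restricted-game structure into the corresponding restricted additive game, so that the cited additive characterization genuinely applies. Everything else is routine once the logarithmic correspondence is set up, but this bookkeeping about restricted games is where a careless argument could break down.
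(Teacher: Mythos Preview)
Your proposal is correct and follows essentially the same route as the paper: reduce to the additive Shapley characterization via the logarithm, so that geometric efficiency becomes additive local accuracy and preserving-ratios becomes preserving-differences, then invoke the additive uniqueness result and exponentiate equation~\eqref{eq:add_shapley_values} applied to $\ln f$ to obtain~\eqref{eq:unique}. If anything, your write-up is more careful than the paper's own proof, which carries out the two log-translations but does not explicitly set up the bijection $\phi = \ln\psi$ or flag the positivity requirement on $f_c$.
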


\begin{definition}\label{def:ines} Given a predictive model $f$  and a dataset $(X, Y)$ and an observation $x$, a feature $j \in [1, m]$ is called inessential, if for every coalition $c \in F, j \notin c$, one has $f_{c \cup  \{j\}}(x_{c \cup \{j\}})= f_{c}(x_{c})$
\end{definition}

\begin{corollary}\label{cor:ines}  Given a predictive model $f$ associated with a dataset $(X, Y)$ and $j$ an inessential feature. Then, the contribution of the feature $j$, $\psi^j(x) = 1$.
\end{corollary}

\subsection{Practical extension: the X-SHAP algorithm}

Following the theoretical generalization of additive contributions to multiplicative contributions, X-SHAP extends the computation of the approximate multiplicative contributions $\tilde{\psi}(x_i)$ of each prediction $x_i \in X$: $\tilde{\psi}^{0} \times \prod \limits_{j = 1}^{m} \tilde{\psi}^{j}(x_i) = \hat{y}_{i}$. While facing the same computational challenges, Thus, the algorithm X-SHAP (Algorithm~\ref{alg:shapvalues_algo}) follows similar initial steps as the  SHAP, such as building a representative reference dataset $X^{ref}$ and selecting the coalitions $C$ with greatest weights. Then, as the predictive model is multiplicative, the whole algorithm of the Kernel SHAP has to be consequently adjusted. Thus, the arithmetic mean is transformed into geometric mean and the linear regression to a logarithm-generalized linear regression. The details of the algorithm are developed in Algorithm~\ref{alg:shapvalues_algo}.

\begin{algorithm}\label{alg:shapvalues_algo}
    \SetKwInOut{Input}{Input}
    \SetKwInOut{Output}{Output}

    function \textbf{x\_shap\_explainer} $(f, x_i, X^{ref}, C, W)$:
    
    \Input{$f$ the predictive function of the model, $x_i$ the observation to interpret, $X^{ref}$ the reference dataset, $C$ the $K$ selected coalitions, $W$ the associated weights of the coalitions}
    
    \Output{Vector $\tilde{\psi}_f(x_i)$ of X-SHAP contributions}
    
    $\hat{y}_{\times}^{ref} \leftarrow <f(X^{ref})>_{\times}$ \tcp{Average reference target value}  \
    $X_i\leftarrow x_i \times \mathds{1}_n$  \tcp{Augmented observation}\
    $\Chi^{c_k}(X, x_i) \leftarrow  X_i^{c_k}+X^{\bar{{c_k}}}, \forall\ {c_k}\ in\ C$ \tcp{Pertubated coalition datasets}\
    $\hat{y}_{\times}^c(x_i) \leftarrow <f(\Chi^c(X, x_i)>_{\times}, \forall\ {c_k}\ in\ C$ \tcp{Coalition average target values}\
    $\Delta_{\times}(x_i) \leftarrow (\hat{y}_{\times}^{{c}_1}(x_i)/\hat{y}_{\times}^{ref}, ...,\hat{y}_{\times}^{{c}_K}(x_i)/\hat{y}_{\times}^{ref})$  \tcp{Coalitions-reference gaps}\
    $C_{s} \leftarrow feature\_selection(C)$ \tcp{Feature selection using Lasso (optional)}  \
    $\tilde{\psi}_f(x_i) \leftarrow \exp((W \cdot C_{s}^{T}C_{s})^{-1}W\cdot C_{s}^{T}\ln(\Delta_{\times}(x_i)))$ \tcp{GLM to obtain contributions}
   
    \caption{X-SHAP for computing multiplicative feature contributions for a single observation $x_i$ following additive Kernel SHAP implementation~\citep{lundberg2017shapgithub}}
\end{algorithm}

Given a fixed number of selected coalitions, the complexity in time and space is polynomial.

\subsection{Interpretation}

\paragraph{Impact interpretation.} X-SHAP measures the multiplicative factor associated with a feature $j$ of the observation $x_i$. If the X-SHAP contribution $\psi^{j}(x_i) > 1$, the value of feature $j$ in observation $x_{i}$ increases the model prediction compared to the baseline. On the contrary, when $\psi^{j}(x_i) < 1$, the feature value decreases the model prediction from baseline. Finally, if $\psi^{j}(x_i) = 1$, the feature is inessential and thus impactless.

\paragraph{Link with log-GLMs.} In the specific case where the predictive model $f$ is a logarithmic Generalized Linear Model such as $\hat{y}_{i} = \exp(\alpha) \times \prod \limits_{j=1}^{k} \exp(\beta^{j} \times x_{i}^{j})$ where $\hat{y}_{i}$ is the prediction for observation $x_{i}$, $\beta^{j}$ is the coefficient for feature $j$ and $\alpha$ is a constant, the link between the multiplicative feature contributions $\psi^{j}$ and the coefficients $\beta^{j}$ of the GLM regression can be expressed as follows.

\begin{proposition} Let us assume features independence, then one has the following relation between terms of GLM's parameters $\beta^{j}$ and contributions $\psi^{j}(x_i)$:
\begin{align}\label{eq:xshap_val_defwithglm}
    \forall j \in [1, m], \psi^{j}(x_i) = \exp( \beta^{j} \times (x_{i}^{j} - <X^{j}>_+))
\end{align}
\end{proposition}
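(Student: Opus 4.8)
The plan is to substitute the log-GLM form of $f$ into the analytical solution of Theorem~\ref{the:unique}, observe that every coalition term collapses to the same value, and then use the fact that the Shapley weights sum to one.

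First I would evaluate $\ln f_c(x_c)$ for an arbitrary coalition $c \subset F$. Here $f_c(x_c)$ is the model "restrained to the features in $c$", which under the independence hypothesis amounts to marginalising the features outside $c$ over their empirical distribution — exactly the geometric-mean step over the perturbated coalition dataset $\Chi^c$ in Algorithm~\ref{alg:shapvalues_algo}. Since $\ln f(x) = \alpha + \sum_{j} \beta^{j} x^{j}$ is additively separable, taking the logarithm turns the geometric mean into an arithmetic mean acting coordinatewise, so
\begin{align*}
\ln f_c(x_c) \;=\; \alpha \;+\; \sum_{j \in c}\beta^{j} x_i^{j} \;+\; \sum_{j \notin c}\beta^{j}\,\langle X^{j}\rangle_+ .
\end{align*}

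Next I would form the increment appearing in eq.~\eqref{eq:unique}: for any $j \notin c$,
\begin{align*}
\ln f_{c \cup \{j\}}(x_{c \cup \{j\}}) - \ln f_c(x_c) \;=\; \beta^{j}\big(x_i^{j} - \langle X^{j}\rangle_+\big),
\end{align*}
which is \emph{independent of $c$}. Factoring this constant out of the sum over $c \subset F \setminus \{j\}$ leaves $\sum_{c \subset F\setminus\{j\}} \tfrac{|c|!(|F|-|c|-1)!}{|F|!}$, which equals $1$ by the standard identity $\sum_{k=0}^{m-1}\binom{m-1}{k}\tfrac{k!(m-k-1)!}{m!}=1$. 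Plugging this back into eq.~\eqref{eq:unique} yields $\psi^{j}(x_i) = \exp\!\big(\beta^{j}(x_i^{j} - \langle X^{j}\rangle_+)\big)$, which is the claim. I would then optionally cross-check Property~\ref{def:eff}: with $\psi^{0} = \langle f \rangle_\times = \exp(\alpha + \sum_j \beta^{j}\langle X^{j}\rangle_+)$, the product $\psi^0 \prod_j \psi^{j}(x_i)$ telescopes back to $\exp(\alpha + \sum_j \beta^{j} x_i^{j}) = f(x_i)$.

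The only genuinely delicate point is the first step: justifying that the abstract quantity $f_c(x_c)$ of Theorem~\ref{the:unique} coincides with averaging $f$ over the empirical marginal of the features outside $c$. This is precisely where feature independence enters, and for a log-GLM it is immediate because $\ln f$ decomposes into one-dimensional terms, so the marginalisation separates across coordinates; everything afterwards is the weight-normalisation identity and routine bookkeeping.
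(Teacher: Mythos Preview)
Your argument is correct, but it proceeds along a genuinely different line from the paper's. You work \emph{bottom-up} from the explicit formula~\eqref{eq:unique}: under independence you identify $\ln f_c(x_c)$ with the marginalised log-prediction, notice that the increment $\ln f_{c\cup\{j\}}-\ln f_c$ is constant in $c$, and then invoke the weight-normalisation identity to collapse the coalition sum. The paper instead works \emph{top-down}: it first proves two auxiliary lemmas --- Lemma~\ref{lemma:1} computes $\psi^{0}=\langle f\rangle_\times=\exp(\alpha)\prod_j\exp(\beta^j\langle X^j\rangle_+)$, and Lemma~\ref{lemma:2} combines this with local accuracy~\eqref{eq:eff} and the GLM expression for $\hat y_i$ to obtain the \emph{product} identity $\prod_j\psi^j(x_i)=\prod_j\exp(\beta^j(x_i^j-\langle X^j\rangle_+))$ --- and only then appeals to feature independence to pass from equality of products to equality term-by-term.

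What each buys: your route is more self-contained and makes the role of the independence hypothesis transparent (it is exactly what lets the marginalisation factor across coordinates, so that the coalition increment depends only on feature $j$); it also never needs $\psi^0$ as a separate input. The paper's route avoids touching the Shapley coalition sum at all and is algebraically shorter, but its last step --- deducing $\psi^j=\exp(\beta^j(x_i^j-\langle X^j\rangle_+))$ for each $j$ from the product equality ``by feature independence'' --- is left essentially unargued, whereas in your derivation that conclusion is forced by the computation itself. Your optional cross-check of Property~\ref{def:eff} is precisely the content of the paper's Lemmas~\ref{lemma:1}--\ref{lemma:2} read in the opposite direction.
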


As expected, the multiplicative feature contribution measures the impact on the model output of the deviation of $x^{j}_{i}$ from expected value in $<X^{j}>_+$. Therefore X-SHAP allows a reconciliation with log-GLMs.

\section{X-SHAP metrics} 

In addition to the computation of multiplicative contributions, a set of tools is developed including metrics and visualizations. In this section, we present the main metrics used in section results.

\begin{definition} \textbf{X-SHAP multiplicative contributions of a group of observations.} The multiplicative contributions $\psi^{j}(G)$ of a group of distinct observations $G={\{x_i\}}_{i \subset [1,n]^{|G|}}$ is defined as the geometric mean of the multiplicative contributions of the observations $\psi^{j}(x_i)$ expressed as:
\begin{align}
    \psi^{j}(G) = <\psi_{j}(x_i)>_{\times, x_i \in G}
\end{align}
\end{definition}

\begin{definition} \textbf{X-SHAP local feature importance.} Let $I^{j}(x_i)$ denotes the local importance of feature $j$ for observation $x_i$. It measures the absolute multiplicative impact of the multiplicative contribution on the model's prediction. It is defined as:
\begin{align}
I^{j}(x_i) = \max(\frac{1}{\psi^{j}}(x_i), \psi^{j}(x_i))
\end{align}
\end{definition}

\begin{definition}\label{def:feature_importance} \textbf{X-SHAP global feature importance.} The global feature importance of the feature $j$, noted $I^{j}$, is defined as the geometric mean of local feature importances: \begin{align}
I^{j} = <I^{j}(x_i)>_{\times, i\in [1,n]}
\end{align}
\end{definition}

\begin{definition} \textbf{X-SHAP partial dependence.} Given a feature $j$ and a range of values $[x^j_1, x^j_2]$ of $x^j$, the partial dependence of the feature $j$ on $[x^j_1, x^j_2]$ is:
\begin{align}\label{eq:pd}
PD^j([x^j_1, x^j_2]) = \frac{<\psi^j([x^j_1, x^j_2])>_{\times}}{<\psi^j>_{\times}}\times <\hat{Y}>_{\times}
\end{align}
where $\psi^j([x^j_1, x^j_2])$ is the contribution vector of feature $j$ restricted to values $x^j_i \in [x^j_1, x^j_2], \forall i \in [1, n]$.
\end{definition}

\section{Data}

Three real-world datasets with continuous targets are used to present our results:

\begin{itemize}
    \item Boston dataset\footnote{\url{https://archive.ics.uci.edu/ml/machine-learning-databases/housing/}}: this dataset contains 13 numerical attributes and 506 observations. The regression task is to predict the median value of owner occupied houses. 
    \item Diabetes dataset\footnote{\url{https://www4.stat.ncsu.edu/~boos/var.select/diabetes.tab.txt}}: this dataset contains 10 numerical attributes and 442 observations. The regression task is to predict the progression of the disease one year after the baseline.
    \item Auto Insurance dataset\footnote{\url{https://www.kaggle.com/c/auto-insurance-fall-2017/data}}: this dataset contains 23 numerical and categorical attributes and 8161 observations. The task is to predict the severity of motor accidents as an expected material claim amount. 
\end{itemize}

Boston and Diabetes datasets are both sets for which the regression problem is easily solved. Moreover they both have a small number of features. These two characteristics make them good candidates to check the coherence and performance of the X-SHAP algorithm. 

The Auto Insurance dataset has more features. It is used to test the X-SHAP method on a real-world example when modeling experts (e.g. actuaries) would typically use GLMs in order to explore the multiplicative effects.

Each dataset is randomly split into a train set (70\% of original size) and a test set. Both a random forest regressor (RF) and a gradient boosting (GB) are fit on the training sets.

The reference data is taken from the training set and the X-SHAP values are computed on the test set.

\section{Results}

We analyze the results from different perspectives (local, global, and segmented) in order to verify the consistency between X-SHAP explanations, classical explanations tools and intuition.

\paragraph{Precision of approximations.}

First, we implement sanity checks to observe empirically properties satisfied by X-SHAP contributions:
\begin{enumerate}
    \item Local accuracy (property~\ref{def:eff}) is verified for predictions of the three datasets. The products of all the contributions are equal to the prediction with a mean percentage error $<10^{-16}$
    \item The estimation of the analytical multiplicative contributions (eq.~\ref{the:unique}) performed by the X-SHAP algorithm is accurate as soon as a sufficient number of coalitions is selected. We observe a quick convergence to analytical contributions. With the three datasets, the stability of the computations is reached when $|C|>500$.
\end{enumerate}

\begin{figure}[ht!]
    \centering
    
    \includegraphics[scale=0.25]{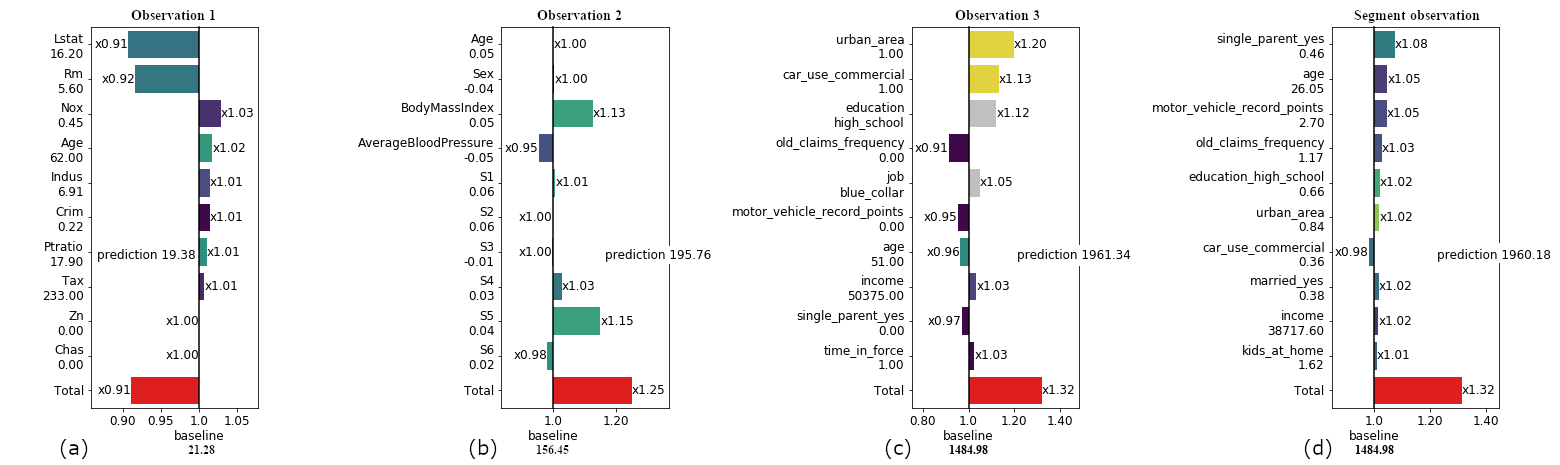}
    
    \caption{\textbf{X-SHAP multiplicative contributions.} X-SHAP multiplicative feature contributions $\tilde{\psi}^j(x_i)$ of top 10 features from the (a) Boston dataset, (b) Diabetes dataset and (c) Auto Insurance dataset. (d) Multiplicative contributions of young persons $\tilde{\psi}^j(G_{<30yo})$ from the Auto Insurance dataset. Read as follows: in (a), the prediction is $0.91$ times the baseline. $Lstat$ which the value is $16.20$ decreases the baseline by a factor of $0.91$ while the $age$ feature which the value is $62$ contributes by $\times 1.02$.}
    \label{fig:local_plot}
\end{figure}

\paragraph{Local explanations.}

Since X-SHAP provides a multiplicative breakdown of a model predictions, X-SHAP gives the possibility to locally depict, for each prediction $(x_{i}, \hat{y}_{i})$, how the values of the features contribute. In Figure~\ref{fig:local_plot}), starting from the reference value, the contributions are multiplied and have positive or negative impact on the final result (in red). These impacts depend on each observation value $x^j_i$.

\paragraph{Summary plots of contributions.}

We extend SHAP summary plots (\citep{lundberg2017shapgithub}) to analyze the impact of feature values to the model's prediction.

\begin{figure}[ht!]
    \centering
    \includegraphics[scale=0.225]{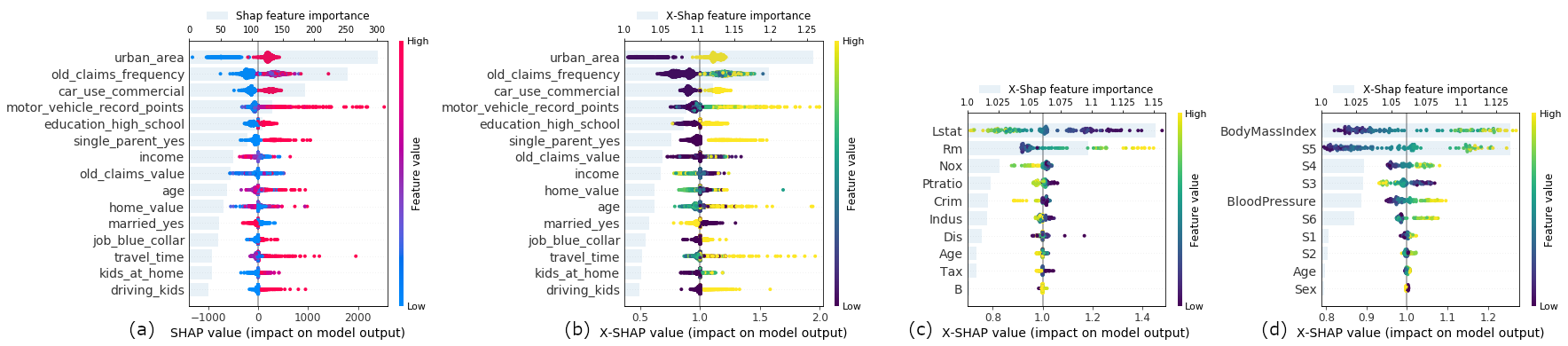}
    \caption{\textbf{Summary plots of contributions.} (a) and (b) : comparison of the Kernel SHAP and X-SHAP summary plots for top 15 features for all observations in $X_{test}$ of the Auto Insurance dataset and RF model. (a) Kernel SHAP additive values. (b) X-SHAP multiplicative values. (c) X-SHAP multiplicative values for the Boston data set and RF model. (d) X-SHAP multiplicative values for Diabetes dataset and RF model. 
    Dots represent pairs (contribution, feature). A heatmap associates the underlying feature value. Outliers are not displayed.
    The underlying bar chart represents the value of the global feature importance of each feature.}
    \label{fig:summary}
\end{figure}

Summary plots, depicted in Figure~\ref{fig:summary}, help to visualize how features interact with the model. Figure \ref{fig:summary}(a) presents the Kernel SHAP value ~\citep{lundberg2017shapgithub} while~\ref{fig:summary}(b) presents X-SHAP values. From these plots we can check consistency between the two algorithms. For most of the features presented there is a clear link between their value and their associated contribution, for example the feature $urban\_area$ identifies whether the person lives a in urban area (high density area). From the X-SHAP summary plot people living in dense areas have a higher average material claim cost than those living in rural areas. Similarly, people with a history of material claim cost ($old\_claims\_frequency$ feature) are more at risk to have material accidents.

\paragraph{Partial dependence of features.}

Estimating the overall marginal effect of a feature helps to better understand the relation between features and model output. Figure~\ref{fig:dependence} shows the comparison of the X-SHAP partial dependence $PD^j([x^j_i, x^j_{i+1}])$ with the partial dependence, defined in~\citet{hastie2009partialdependence}, for four different features from the Auto Insurance datasets. Both methodologies agree on the behavior of the dependency between the model and the features. Differences in values is mainly due to the way averages are computed: X-SHAP uses a geometric mean which is smaller than the arithmetic mean and less sensitive to outliers.

\begin{figure}[ht!]
    \centering
    \includegraphics[scale=0.23]{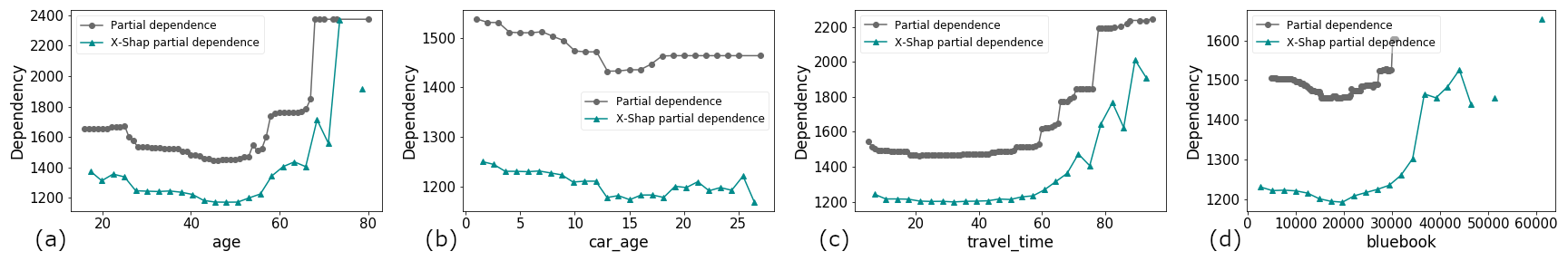}
    
    \caption{\textbf{Partial dependance plots.} Comparison between X-SHAP partial dependence $PD^j([x^j_1, x^j_2])$ (eq.~\ref{eq:pd}) and traditional additive partial dependence~\citep{hastie2009partialdependence} over four different features of the Auto Insurance dataset: (a) age, (b) car age, (c) travel time and (d) bluebook. For the X-SHAP dependence plots data was discretized in 25 bins.}
    \label{fig:dependence}
\end{figure}

\paragraph{Feature importance}
To understand a model from a global perspective, a used approach is the feature importance. Standard libraries implement such feature importance computation methods. X-SHAP feature importance is computed using the definition~\ref{def:feature_importance}. The larger the metric, the greater the effect of the feature on the model prediction.
Figure~\ref{fig:feature_importance} compares feature importance of RF model for Diabetes dataset: (a) inner implementation from RF model, (b) Kernel SHAP feature importance (defined as the mean of contribution absolute value), and (c) X-SHAP feature importance. Once again Kernel SHAP and X-SHAP assigns almost the same order of importance (only two order inversions). Moreover X-SHAP results are consistent with intuition since it is commonly acknowledged by experts that Body Mass Index is a major determinant of the evolution of the disease.

\begin{figure}[ht!]
    \centering
    \includegraphics[scale=0.27]{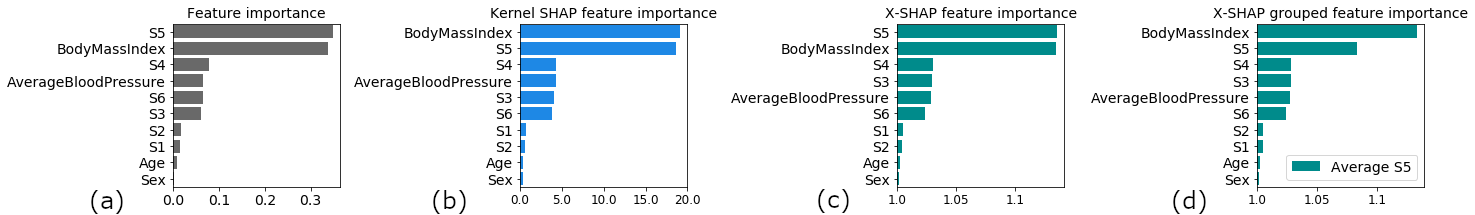}
    \caption{\textbf{Feature importance.} Comparison of the results of (a) feature importances given by the RF model, (b) Kernel SHAP feature importances, defined as the mean for each feature $j \in F$ of the absolute value of contribution for all observations $i$ and (c) X-SHAP feature importances $I^j$. (d) X-SHAP feature importances $I^j$ are depicted for the group of patients having a $S5$ feature value close to the average observed in the cohort.}
    \label{fig:feature_importance}
    
\end{figure}

\paragraph{Interpretation of a group of predictions.}

X-SHAP contributions can be aggregated to represent a certain group of observations sharing one or more characteristics, thus enabling another explanation level. This level can be adapted for all defined metrics: contributions, partial dependence and feature importances. For instance, Figure~\ref{fig:local_plot}(d) exhibits the interpretation of the young segment whereas Figure~\ref{fig:feature_importance}(d) presents the X-SHAP feature importance for the patients for which the $S5$ (lamotrigine blood measurement) feature value was close to the average observed in the cohort. While for the whole test set the features Body Mass Index and S5 have a similar effect magnitude, for this specific group there is a clear gap between the importance of these two features.

\section{Conclusion}

The increased need to providing highly accurate and interpretable multiplicative models has driven the development of X-SHAP, a model-agnostic interpreter that provides local approximations of the multiplicative contributions accompanied with theoretical proofs and empirical checks. In addition, we introduce the X-SHAP toolbox, a new set of tools to analyze local, global and segmented model structure by aggregating multiple local contributions of each or part of individual predictions.

Although the X-SHAP algorithm has a polynomial complexity, interesting opportunities regarding the decrease of complexity in time can arise while exploring the advantage of developing model-specific approximations of the multiplicative contributions for tree based ensemble models or neural networks.

\section*{Broader Impact}

X-SHAP offers a robust and model-agnostic methodology to assess multiplicative contributions. This unique method strengthens the set of techniques and tools contributing to making machine learning more transparent, auditable and accessible. This method is expected to prove useful for multiplicative underlying structures of modeled phenomena, such as areas where modelers are used to apply log-GLMs (e.g. actuaries modeling claims, epidemiology spreading modeling, disease risk factors estimation, energy consumption forecasting). It is provided as a tool that can help these experts adopt machine learning models with appropriate interpretability framework that stick to their habits.

\bibliographystyle{plainnat}
\bibliography{references}

\nocite{*}

\appendix

\section{Proofs}

\paragraph{Proof of Theorem 1.}

\begin{proof}
The proof of Theorem~\ref{the:unique} can be directly deducted from the results of Ortmann (2012)~\citep{ortmann2013multiplicativeshap} while traducing game theory problem to the interpretability problem. It originally derives direclty from the unicity of the additive shapley values. On one hand, the proof of the geometrical efficiency can be derived from additive version as follows:

\begin{align}
    \forall i \in [1, n],  \phi^{0} + \sum \limits_{j=1}^{k} \phi^{(j)}_{i} = \ln (\hat{y}_{i}) \\
    \nonumber \Leftrightarrow
    \forall i \in [1, n],  \ln (\psi^{0}) + \sum \limits_{j=1}^{k} \ln(\psi^{(j)}_{i}) = \ln (\hat{y}_{i}) \\
    \nonumber \Leftrightarrow
    \forall i,\  \exp (\ln (\psi^{(0)}) + \sum \limits_{j=1}^{k} \ln(\psi^{(j)}_{i}) )= \hat{y}_{i} \\
    \nonumber \Leftrightarrow
    \forall i,\  \psi^{(0)} \times \prod \limits_{j=1}^{k} \psi^{(j)}_{i} = \hat{y}_{i}
\end{align}

On the other hand, the proof of preserving-ratios is directly extended from the additive shapley values that preserve differences. Thus, given any coalition $c \in C$, it gives the following relations:

\begin{align}
    \forall j_1 \ne j_2, \phi^{j_1}(x) - \phi^{j_1}(c\setminus \{j_2\}, x) = \phi^{j_2}(x) -\phi^{j_2}(c\setminus \{j_1\}, x) \\
    \nonumber \Leftrightarrow
    \forall j_1 \ne j_2, \ln(\psi^{j_1}(x)) - \ln(\psi^{j_1}(c\setminus \{j_2\}, x)) = \ln(\psi^{j_2}(x)) - \ln(\psi^{j_2}(c\setminus \{j_1\}, x)) \\
    \nonumber \Leftrightarrow 
    \forall j_1 \ne j_2, \exp(\ln(\psi^{j_1}(x)) - \ln(\psi^{j_1}(c\setminus \{j_2\}, x))) = \exp(\ln(\psi^{j_2}(x)) - \ln(\psi^{j_2}(c\setminus \{j_1\}, x))) \\
    \nonumber \Leftrightarrow
    \forall j_1 \ne j_2, \frac{\exp(\ln(\psi^{j_1}(x)))}{\exp(\ln(\psi^{j_1}(c\setminus \{j_2\}, x)))} = \frac{\exp(\ln(\psi^{j_2}(x)))}{\exp(\ln(\psi^{j_2}(c\setminus \{j_1\}, x)))} \\
    \nonumber \Leftrightarrow
    \forall x \in X,
    \forall j_1 \ne j_2, \frac{\psi^{j_1}(x)}{\psi^{j_1}(c\setminus \{j_2\}, x)} = \frac{\psi^{j_2}(x)}{\psi^{j_2}(c\setminus \{j_1\}, x)}
\end{align}
All in all, as the additive Shapley values are the unique solution of the model-agnostic additive interpretable problem that respects local accuracy and preserve differences, the multiplicative Shapley values are the unique solution of the model-agnostic multiplicative interpretable problem that both respects local accuracy and preserve ratios.
\end{proof}

\paragraph{Proof of Proposition 1.}

\begin{lemma}\label{lemma:1} Given a predictive log-GLM model $f$ associated to a dataset $(X, Y)$ and $\psi$ the multiplicative shapley values, the relation between $\psi^0$ and $f$ is:
\begin{align}\label{eq:psi0}
\psi^{0} = \exp(\alpha)\prod \limits_{j=1}^{k} \exp(  \beta^{j} \times X^{j} ) 
\end{align}
\end{lemma}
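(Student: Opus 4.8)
The plan is to identify the baseline $\psi^0$ with the geometric mean of the model's predictions over the dataset, and then evaluate this geometric mean in closed form for the log-GLM $f$. First I would recall that, just as in the additive case the baseline equals the empty-model prediction $\phi^0 = f_\emptyset = \langle f(X)\rangle_+$, in the multiplicative setting the baseline is the prediction of the model with no activated feature, which in the model-agnostic construction (and in Algorithm~\ref{alg:shapvalues_algo}, the line defining $\hat{y}_{\times}^{ref}$) is obtained by averaging $f$ over the dataset with a geometric mean: $\psi^0 = \langle f(X)\rangle_\times = \big(\prod_{i=1}^n f(x_i)\big)^{1/n}$.

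Equivalently, I can read this off from eq.~\eqref{eq:unique}: summing $\ln \psi^j(x)$ over $j \in [1,m]$, the inner differences $\ln f_{c\cup\{j\}}(x_{c\cup\{j\}}) - \ln f_c(x_c)$ telescope by the standard efficiency property of Shapley values, so that $\prod_{j=1}^m \psi^j(x) = f(x)/f_\emptyset(x_\emptyset)$; combining this with geometrical efficiency (Property~\ref{def:eff}) forces $\psi^0 = f_\emptyset(x_\emptyset) = \langle f(X)\rangle_\times$. Either route pins down the quantity to be computed.

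Next I would substitute $f(x_i) = \exp(\alpha)\prod_{j=1}^m \exp(\beta^j x_i^j)$ and use that the geometric mean distributes over finite products, together with the elementary identity $\langle \exp(\beta^j x^j)\rangle_\times = \exp\!\big(\beta^j \langle X^j\rangle_+\big)$ (taking logs turns the $\tfrac1n\sum_i$ defining the geometric mean into the arithmetic mean of $x^j$). This gives
\begin{align*}
\psi^0 = \Big(\prod_{i=1}^n \exp(\alpha)\prod_{j=1}^m \exp(\beta^j x_i^j)\Big)^{1/n} = \exp(\alpha)\prod_{j=1}^m \exp\!\big(\beta^j \langle X^j\rangle_+\big),
\end{align*}
which is exactly the claimed expression (with the shorthand $X^j$ standing for $\langle X^j\rangle_+$).

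The calculation is routine; the only step that needs care is the first one — being explicit that $\psi^0$ denotes the geometric-mean baseline, the multiplicative analogue of the additive convention $\phi^0 = \langle f(X)\rangle_+$. I would state this either as a definitional choice inherited from the algorithm or, as above, derive it from eq.~\eqref{eq:unique} together with Property~\ref{def:eff}; once it is granted, nothing else is subtle. In particular, note that the feature-independence hypothesis of Proposition~1 is not used in this lemma — it will only be invoked when combining this lemma with local accuracy to obtain the per-feature formula $\psi^j(x_i) = \exp(\beta^j(x_i^j - \langle X^j\rangle_+))$.
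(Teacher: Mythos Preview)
Your proposal is correct and follows essentially the same route as the paper: identify $\psi^0$ with the geometric mean $\langle \hat{y}_i\rangle_\times$ (the paper simply writes this as the multiplicative analogue of $\phi^0 = \langle \hat{y}_i\rangle_+$, while you additionally justify it via telescoping of eq.~\eqref{eq:unique} together with Property~\ref{def:eff}), then substitute the log-GLM form and compute the geometric mean explicitly to obtain $\exp(\alpha)\prod_j \exp(\beta^j \langle X^j\rangle_+)$. Your extra derivation of the baseline from efficiency is a nice touch but not a different method.
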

\begin{proof}
Starting from the known relation in additive version between $\phi^0$ and $f$:
\begin{align}
    \nonumber
    \phi^{0} = <\hat{y}_{i}>_+ \\
    \nonumber \Leftrightarrow \psi^{0} = <\hat{y}_{i}>_{\times} \\
    \nonumber \Leftrightarrow
    \psi^{0} = (\prod \limits _{i=1} ^{n} \exp(\alpha) \times \prod \limits_{j=1}^{k} \exp(\beta^{j} \times x_{i}^{j})  )^{\frac{1}{n}} \\
    \nonumber \Leftrightarrow
    \psi^{0} = \exp(\alpha)\prod \limits_{j=1}^{k} \exp(  \beta^{j} \times \sum \limits _{i=1} ^{n} \frac{x_{i}^{j} }{n} ) \\
    \nonumber \Leftrightarrow
    \psi^{0} = \exp(\alpha)\prod \limits_{j=1}^{k} \exp(  \beta^{j} \times <X^j>_+ ) 
\end{align}
\end{proof}

\begin{lemma}\label{lemma:2} Given a predictive log-GLM model $f$ associated to a dataset $(X, Y)$ and $\psi$ the multiplicative shapley values, the relation between $\psi^j, \forall j \in [1,m]$ and $f$ is:
\begin{align}\label{eq:xshap_val_glm}
    \prod \limits_{j=1}^{m} \psi^{j}(x_i)  = \prod \limits_{j=1}^{m} \exp( \beta^{j} \times (x_{i}^{j} - <X^{j}>_+))
\end{align}
\end{lemma}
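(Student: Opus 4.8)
The plan is to derive the lemma directly from the geometrical efficiency property (Property~\ref{def:eff}) together with Lemma~\ref{lemma:1}, so that essentially no work beyond elementary algebra is needed. First I would invoke local accuracy, which gives $\psi^{0} \times \prod_{j=1}^{m} \psi^{j}(x_i) = f(x_i) = \hat{y}_i$ for every observation $x_i$, and hence rearrange it to $\prod_{j=1}^{m} \psi^{j}(x_i) = \hat{y}_i / \psi^{0}$.

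Next I would substitute the assumed log-GLM form of the prediction, $\hat{y}_i = \exp(\alpha) \times \prod_{j=1}^{m} \exp(\beta^{j} \times x_i^{j})$, in the numerator, and the closed form of the baseline established in Lemma~\ref{lemma:1}, $\psi^{0} = \exp(\alpha) \prod_{j=1}^{m} \exp(\beta^{j} \times \langle X^{j}\rangle_{+})$, in the denominator. The constant factor $\exp(\alpha)$ cancels, and collecting the remaining exponential terms feature by feature via $\exp(a)/\exp(b) = \exp(a-b)$ yields $\prod_{j=1}^{m} \exp(\beta^{j} \times (x_i^{j} - \langle X^{j}\rangle_{+}))$, which is exactly the claimed identity.

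The only delicate points here are bookkeeping rather than mathematics: one must make sure the index ranges agree (the $k$ features appearing in the GLM are the $m$ features of the model-agnostic setting, any inessential feature contributing a factor $\exp(\beta^{j}\cdot 0)=1$ on both sides, consistent with Corollary~\ref{cor:ines}), and one must confirm that Lemma~\ref{lemma:1} is available under exactly the hypotheses of Proposition~1 (a fitted log-GLM on $(X,Y)$). Note that the lemma as stated concerns only the full product and therefore does not require feature independence; independence is needed only afterwards, to split the product termwise and conclude the per-feature identity $\psi^{j}(x_i) = \exp(\beta^{j}\times(x_i^{j}-\langle X^{j}\rangle_{+}))$ of Proposition~1. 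I expect the whole argument to be a short chain of equalities, with no genuine obstacle.
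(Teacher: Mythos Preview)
Your proposal is correct and follows essentially the same route as the paper: equate the local-accuracy expression $\psi^{0}\prod_{j}\psi^{j}(x_i)=\hat y_i$ with the log-GLM form of $\hat y_i$, substitute the closed form of $\psi^{0}$ from Lemma~\ref{lemma:1}, cancel $\exp(\alpha)$, and regroup the exponentials feature by feature. Your remarks about the $k$ versus $m$ bookkeeping and about independence being needed only for the subsequent per-feature identity are accurate and slightly more careful than the paper itself.
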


\begin{proof}
Introducing the expression of $\psi^{0}$ using the GLM's parameters found in eq.~\eqref{eq:psi0} into the two definitions of $\hat{y}_{i}$ (using log-GLM definition and feature contribution in eq.~\eqref{eq:xshap_def}) gives the following proof:
\begin{align}
    \nonumber
    \hat{y}_{i} = \psi^{0} \times \prod \limits_{j=1}^{k} \psi^{j}_{i} = \exp(\alpha) \times \prod \limits_{j=1}^{k} \exp(\beta^{j} \times x_{i}^{j} ) \\
    \nonumber \Leftrightarrow
    \exp(\alpha)\prod \limits_{j=1}^{m} \exp( \beta^{j} \times \bar{X}^{j} )  \times \prod \limits_{j=1}^{m} \psi^{j}(x_i)  = \exp(\alpha) \times \prod \limits_{j=1}^{m} \exp( \beta^{j} \times x_{i}^{j} )\\
    \nonumber \Leftrightarrow
    \prod \limits_{j=1}^{m} \psi^{j}(x_i)  = \prod \limits_{j=1}^{m} \exp( \beta^{j} \times (x_{i}^{j} - <X^{j}>_+))
\end{align}
\end{proof}

The proof of the \textbf{Proposition 1} is then straight forward.

\begin{proof}
From Lemma~\ref{lemma:2} and assuming the feature independence in eq.~\eqref{eq:xshap_val_glm}, the proof is straight forward:
\begin{align}
    \nonumber
    \prod \limits_{j=1}^{k} \psi^{j}(x_i)  = \prod \limits_{j=1}^{k} \exp( \beta^{j} \times (x_{i}^{j} - <X^{j}>_+))\\
    \nonumber \Rightarrow
    \forall j \in [1, m] \psi^{j}(x_i)  = \exp( \beta^{j} \times (x_{i}^{j} - <X^{j}>_+))
\end{align}
\end{proof}

\section{Sanity checks.}

We implement sanity checks to observe empirically properties satisfied by X-SHAP contributions.

\paragraph{Local accuracy.} Property~\ref{def:eff} is verified for predictions of the three datasets. The products of all the contributions are equal to the prediction with a mean percentage error $<10^{-16}$ (see Table~\ref{tab:model_xshap_score})

\paragraph{Precision of the approximations.} 
The X-SHAP algorithm performs an approximation of the analytical multiplicative contributions (eq.~\ref{the:unique}). The main approximation done is linked to the number of selected coalitions $|C|=n_{coalitions}$. In the algorithm X-SHAP, only the coalitions with biggest weights $W$ are considered. In Figure~\ref{fig:error_plot}, the value of the contributions are computed according to the number of considered coalitions. It appears that the convergence to stable contributions is relatively quick for the randomly selected observations. 

\begin{figure}[ht!]
    \centering
    
    \includegraphics[scale=0.3]{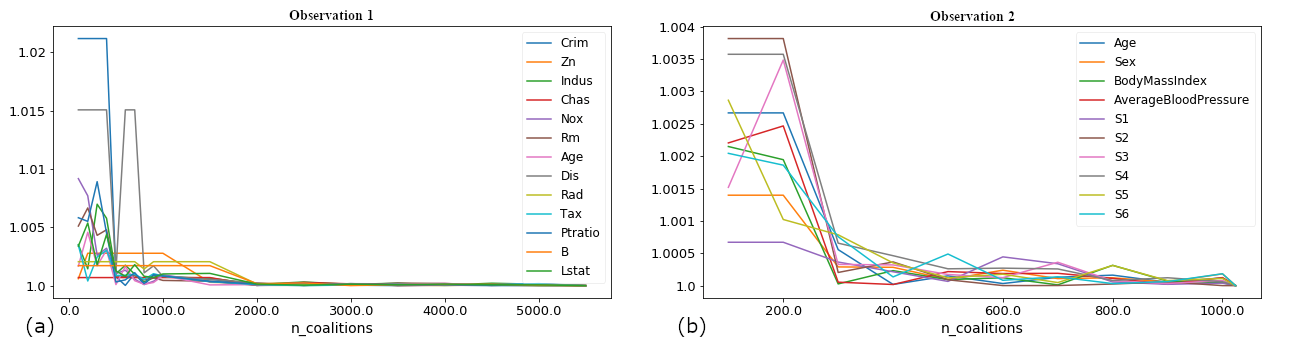}
    
    \caption{\textbf{Precision.} Relative error between X-SHAP multiplicative feature contribution value $\tilde{\psi}^j(x_i)$ found for a given number of coalitions $n\_coalitions$ and the X-SHAP multiplicative feature contributions found for the last value of $n\_coalitions$ displayed, for one observation from the (a) Boston dataset and (b) Diabetes dataset.}
    \label{fig:error_plot}
\end{figure}

\begin{table}
    \centering
    \begin{tabular}{||c|c||c|c|c|c|c|c|c||}
        \hline
        Data set & Model & MSE & R2 & mean\_APE & median\_APE & std\_APE & max\_APE\\
        \hline
        \multirow{2}{4em}{Boston} & RF & $4.96e-29$ & $1.0$ & $2.33e-16$ & $2.00e-16$ & $1.69e-16$ & $7.28e-16$ \\ 
        & GB & $3.44e-29$ & $1.0$ & $1.84e-16$ & $1.70e-16$ & $1.64e-16$ & $5.96e-16$ \\ 
        \hline
        \multirow{2}{4em}{Diabetes} & RF & $2.95e-27$ & $1.0$ & $2.79e-16$ & $2.40e-16$ & $2.15e-16$ & $9.54e-16$ \\ 
        & GB & $2.81e-27$ & $1.0$ & $2.61e-16$ & $2.14e-16$ & $1.90e-16$ & $8.80e-16$ \\ 
        \hline 
        \multirow{2}{4em}{Auto ED} & RF & $6.04e-25$ & $1.0$ & $3.10e-16$ & $2.52e-16$ & $2.41e-16$ & $1.5e-15$\\ 
        & GB & $6.83e-25$ & $1.0$ & $3.39e-16$ & $2.94e-16$ & $2.69e-16$ & $2.22e-15$ \\ 
        \hline 
    \end{tabular}
    
    \caption{\textbf{Local accuracy.} Scores of X-SHAP estimated contributions output against model predictions $\hat{y}$ for all three data sets and two models tested}
    \label{tab:model_xshap_score}
\end{table}

\end{document}